\newcommand{\argmin}{\mathop{\rm arg~min}\limits}
\begin{document}
\title{Understanding Test-Time Augmentation}
%
%\titlerunning{Abbreviated paper title}
% If the paper title is too long for the running head, you can set
% an abbreviated paper title here
%
\author{Masanari Kimura\inst{1}\orcidID{0000-0002-9953-3469}}
\authorrunning{Masanari Kimura}
% First names are abbreviated in the running head.
% If there are more than two authors, 'et al.' is used.
%
\institute{Ridge-i Inc., Tokyo, Japan \\ \email{mkimura@ridge-i.com}\\ 
%\url{http://www.springer.com/gp/computer-science/lncs} \and
}
\maketitle              % typeset the header of the contribution
\begin{abstract}
Test-Time Augmentation (TTA) is a very powerful heuristic that takes advantage of data augmentation during testing to produce averaged output.
Despite the experimental effectiveness of TTA, there is insufficient discussion of its theoretical aspects.
In this paper, we aim to give theoretical guarantees for TTA and clarify its behavior.

\keywords{data augmentation, ensemble learning, machine learning}
\end{abstract}
\section{Introduction}
The effectiveness of machine learning has been reported for a great variety of tasks~\cite{mohri2018foundations,krizhevsky2012imagenet,kotsiantis2007supervised,arganda2017trainable,indurkhya2010handbook}.
However, satisfactory performance during testing is often not achieved due to the lack of training data or the complexity of the model.

One important concept to tackle such problems is data augmentation.
The basic idea of data augmentation is to increase the training data by transforming the input data in some way to generate new data that resembles the original instance.
Many data augmentations have been proposed~\cite{shorten2019survey,Zhong2020-bk,Park2019-xj,Kimura2020-oq}, ranging from simple ones, such as flipping input images~\cite{perez2017effectiveness,mikolajczyk2018data}, to more complex ones, such as leveraging Generative Adversarial Networks (GANs) to automatically generate data~\cite{frid2018synthetic,frid2018gan}.
In addition, there are several studies on automatic data augmentation in the framework of AutoML~\cite{Lim2019-qq,Hataya2020-vz}.

Another approach to improve the performance of machine learning models is ensemble learning~\cite{dietterich2002ensemble,polikar2012ensemble}.
Ensemble learning generates multiple models from a single training dataset and combines their outputs, hoping to outperform a single model.
The effectiveness of ensemble learning has also been reported in a number of domains~\cite{dong2020survey,fersini2014sentiment,Li2020-dk}.

Influenced by these approaches, a new paradigm called Test-Time Augmentation (TTA)~\cite{wang2019aleatoric,wang2018automatic,moshkov2020test} has been gaining attention in recent years.
TTA is a very powerful heuristic that takes advantage of data augmentation during testing to produce averaged output.
Despite the experimental effectiveness of TTA, there is insufficient discussion of its theoretical aspects.
In this paper, we aim to give theoretical guarantees for TTA and clarify its behavior.
Our contributions are summarized as follows:
\begin{itemize}
    \item We prove that the expected error of the TTA is less than or equal to the average error of an original model. Furthermore, under some assumptions, the expected error of the TTA is strictly less than the average error of an original model;
    \item We introduce the generalized version of the TTA, and the optimal weights of it are given by the closed-form;
    \item We prove that the error of the TTA depends on the ambiguity term.
\end{itemize}

\section{Preliminaries}
Here, we first introduce the notations and problem formulation.

\subsection{Problem formulation}
Let $\mathcal{X}\in\mathbb{R}^d$ be the $d$-dimensional input  space, $\mathcal{Y}\in\mathbb{R}$ be the output space, and $\mathcal{H} = \{h(\bm{x}; \bm{\theta}):\mathcal{X}\to\mathcal{Y}\ |\ \bm{\theta}\in\Theta \}$ be a hypothesis class, where $\Theta\subset\mathbb{R}^p$ is the $p$-dimensional parameter space.
In supervised learning, our goal is to obtain $h^*\in\mathcal{H} : \mathcal{X}\to\mathcal{Y}$ such that
\begin{equation}
    h^* = \argmin_{h\in\mathcal{H}}\mathcal{R}^\ell(h) = \argmin_{h\in\mathcal{H}}\mathbb{E}\Big[\ell(y, h(\bm{x};\bm{\theta}))\Big],
\end{equation}
where
\begin{equation}
    \mathcal{R}^\ell(h) \coloneqq \mathbb{E}\Big[\ell(y, h(\bm{x}; \bm{\theta}))\Big]
\end{equation}
is the expected error and $\ell:\mathcal{Y}\times\mathcal{Y}\to\mathbb{R}_+$ is some loss function.
Since we can not access $\mathcal{R}^\ell(h)$ directly, we try to approximate $\mathcal{R}^\ell(h)$ from the limited sample $S = \{(y_i, \bm{x}_i)\}^N_{i=1}$ of size $N\in\mathbb{N}$.
It is the ordinal empirical risk minimization (ERM) problem, and the minimizer of the empirical error $\hat{\mathcal{R}}^\ell_S\coloneqq\frac{1}{N}\sum^N_{i=1}\ell(y_i, h(\bm{x}_i))$ can be calculated as
\begin{equation}
    \hat{h} = \argmin_{h\in\mathcal{H}}\hat{\mathcal{R}}^\ell_S(h) = \argmin_{h\in\mathcal{H}}\frac{1}{N}\sum^N_{i=1}\ell(y_i, h(\bm{x}_i; \bm{\theta})).
\end{equation}

It is known that when the hypothesis class is complex (e.g., a class of neural networks), learning by ERM can lead to overlearning~\cite{hawkins2004problem}.
To tackle this problem, many approaches have been proposed, such as data augmentation~\cite{van2001art,perez2017effectiveness,zhang2017mixup} and ensemble learning~\cite{dietterich2002ensemble,polikar2012ensemble,dong2020survey}.
Among such methods, Test-Time Augmentation (TTA)~\cite{wang2019aleatoric,wang2018automatic,moshkov2020test} is an innovative paradigm that has attracted a great deal of attention in recent years.

\subsection{TTA: Test-Time Augmentation}
The TTA framework is generally described as follows:
let $\bm{x}\in\mathcal{X}$ be the new input variable at test time.
We now consider multiple data augmentations $\{\tilde{\bm{x}}_i \}^m_{i=1}$ for $\bm{x}$, where $\tilde{\bm{x}}_i\in\mathbb{R}^d$ is the $i$-th augmented data where $\bm{x}$ is transformed and $m$ is the number of strategies for data augmentation.
Finally, we compute the output $\tilde{y}$ for the original input $\bm{x}$ as $\tilde{y} = \sum^m_{i=1}h(\tilde{\bm{x}}_i)$.
Thus, intuitively, one would expect $\tilde{y}$ to be a better predictor than $y$.
TTA is a very powerful heuristic, and its effectiveness has been reported for many tasks~\cite{wang2019aleatoric,wang2018automatic,moshkov2020test,amiri2020two,Tian2020-yi}.
Despite its experimental usefulness, the theoretical analysis of TTA is insufficient.
In this paper, we aim to theoretically analyze the behavior of TTA.
In addition, at the end of the manuscript, we provide directions for future works~\cite{shanmugam2020and} on the theoretical analysis of TTA in light of the empirical observations given in existing studies.

\section{Theoretical results for the Test-Time Augmentation}
In this section, we give several theoretical results for the TTA procedure.
\subsection{Re-formalization of TTA}
First of all, we reformulate the TTA procedure as follows.
\begin{definition}{(Augmented input space)}
For the transformation class $\mathcal{G}$, we define the augmented input space $\bar{\mathcal{X}}$ as
\begin{equation}
    \bar{\mathcal{X}} \coloneqq \mathcal{X} \cup \Biggl(\bigcup^\infty_{i=1} g(\mathcal{X}; \bm{\xi}_i)\Biggr) = \mathcal{X}\cup\Biggl(\bigcup^\infty_{i=1}\bigcup^\infty_{j=1}g(\bm{x}_j;\bm{\xi}_i)\Biggr).
\end{equation}
\end{definition}
\begin{definition}{(TTA as the function composition)}
\label{def:tta}
Let $\mathcal{F}=\{f(\bm{x};\bm{\theta}_{\mathcal{F}})\ |\ \bm{\theta}_{\mathcal{F}}\in\Theta_{\mathcal{F}}\subset\Theta\}\subset\mathcal{H}$ be a subset of the hypothesis class and $\mathcal{G} = \{g(\bm{x};\bm{\xi}):\mathcal{X}\to\bar{\mathcal{X}}\ |\ \bm{\xi}\in\Xi \}$ be the transformation class.
We assume that $\{g_i = g(\bm{x};\bm{\xi}_i)\}^m_{i=1}$ is a set of the data augmentation strategies, and the TTA output $\tilde{y}$ for the input $\bm{x}$ is calculated as
\begin{equation}
    \tilde{y}(\bm{x}, \{\bm{\xi}^m_{i=1}\}) \coloneqq \sum^m_{i=1}f\circ g_i (\bm{x}) = \frac{1}{m}\sum^m_{i=1}f(g(\bm{x};\bm{\xi}_i); \bm{\theta}_{\mathcal{F}}).
\end{equation}
\end{definition}
From these definitions, we have the expected error for the TTA procedure as follows.
\begin{definition}{(Expected error with TTA)}
The empirical error $\mathcal{R}^{\ell,\mathcal{G}}$of the hypothesis $h\in\mathcal{H}$ obtained by the TTA with transformation class $\mathcal{G}$ is calculated as follows:
\begin{equation}
    \mathcal{R}^{\ell,\mathcal{G}}(h) \coloneqq \int_{\mathcal{X}\times\mathcal{Y}}\ell(y, \tilde{y}(\bm{x}, \{\bm{\xi}^m_{i=1}\}))p(\bm{x}, y)d\bm{x}dy.
\end{equation}
\end{definition}
The next question is, whether $\mathcal{R}^{\ell,\mathcal{G}}(h)$ is less than $\mathcal{R}^\ell$ or not.
In addition, if $\mathcal{R}^{\ell,\mathcal{G}}(h)$ is strictly less than $\mathcal{R}^\ell$, it is interesting to show the required assumptions.

\subsection{Upper bounds for the TTA}
Next we derive the upper bounds for the TTA.
For the sake of argument, we assume that $\ell(a, b) = (a - b)^2$ and we decompose the output of the hypothesis for $(\bm{x}, y)$ as follows:
\begin{equation}
    h(\bm{x};\bm{\theta}) = y + \epsilon(\bm{x},y;\bm{\theta}) \quad (\forall{h\in\mathcal{H}}).
\end{equation}
Then, the following theorem holds.
\begin{theorem}
Assume that $f\circ g \in \mathcal{H}$ for all $f\in\mathcal{F}$ and $g\in\mathcal{G}$, and $\mathcal{G}$ contains the identity transformation $g:\bm{x}\mapsto\bm{x}$.
Then, the expected error obtained by TTA is bounded from above by the average error of single hypothesises:
\begin{equation}
    \mathcal{R}^{\ell,\mathcal{G}}(h) \leq \bar{\mathcal{R}}^\ell(h) \coloneqq \mathbb{E}\Biggl[\frac{1}{m}\sum^m_{i=1}\ell(y, h(\bm{x};\bm{\theta}_i))\Biggr].
\end{equation}
\end{theorem}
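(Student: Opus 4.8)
The plan is to reduce the statement to a single application of Jensen's inequality for the convex function $t \mapsto t^2$, applied pointwise in $(\bm{x}, y)$ and then integrated. First I would invoke the hypothesis that $f \circ g \in \mathcal{H}$ for every $f \in \mathcal{F}$ and $g \in \mathcal{G}$: this makes each composed predictor $h_i \coloneqq f \circ g_i$ a genuine member of $\mathcal{H}$, so that the decomposition $h(\bm{x};\bm{\theta}) = y + \epsilon(\bm{x}, y; \bm{\theta})$ introduced just before the theorem applies to it, say $h_i(\bm{x}) = y + \epsilon_i$ with $\epsilon_i \coloneqq \epsilon(\bm{x}, y; \bm{\theta}_i)$; it also makes the right-hand side $\bar{\mathcal{R}}^\ell(h) = \mathbb{E}[\frac{1}{m}\sum_{i=1}^m \ell(y, h(\bm{x};\bm{\theta}_i))]$ a legitimate average over single hypotheses, and the presence of the identity map in $\mathcal{G}$ guarantees the un-augmented model is one of them. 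With this notation the TTA output from Definition~\ref{def:tta} is $\tilde{y}(\bm{x}, \{\bm{\xi}_i\}_{i=1}^m) = \frac{1}{m}\sum_{i=1}^m h_i(\bm{x}) = y + \frac{1}{m}\sum_{i=1}^m \epsilon_i$.

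Next I would rewrite both errors purely in terms of the $\epsilon_i$. Since $\ell(a,b) = (a-b)^2$, the integrand on the left becomes $\ell(y, \tilde{y}) = \bigl(\frac{1}{m}\sum_{i=1}^m \epsilon_i\bigr)^2$, while the integrand on the right is $\frac{1}{m}\sum_{i=1}^m \ell(y, h_i(\bm{x})) = \frac{1}{m}\sum_{i=1}^m \epsilon_i^2$. Thus it suffices to prove the deterministic inequality $\bigl(\frac{1}{m}\sum_{i=1}^m \epsilon_i\bigr)^2 \leq \frac{1}{m}\sum_{i=1}^m \epsilon_i^2$ for every fixed $(\bm{x}, y)$, because integrating both sides against $p(\bm{x}, y)$ preserves the inequality and produces exactly $\mathcal{R}^{\ell,\mathcal{G}}(h) \leq \bar{\mathcal{R}}^\ell(h)$.

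That deterministic inequality is immediate: it is Jensen's inequality for $\phi(t) = t^2$ with uniform weights $1/m$, equivalently the Cauchy--Schwarz inequality, equivalently non-negativity of the sample variance $\frac{1}{m}\sum_{i=1}^m \epsilon_i^2 - \bigl(\frac{1}{m}\sum_{i=1}^m \epsilon_i\bigr)^2 = \frac{1}{m}\sum_{i=1}^m (\epsilon_i - \bar{\epsilon})^2 \geq 0$, where $\bar{\epsilon} = \frac{1}{m}\sum_{i=1}^m \epsilon_i$. I do not expect a genuine obstacle here; the only delicate points are bookkeeping ones — making sure the composition assumption is used so each $h_i$ lies in $\mathcal{H}$ and carries its own $\epsilon_i$, and being explicit that the squared loss and the averaging (not summing) convention of Definition~\ref{def:tta} are what make the argument go through. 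The equality case of Jensen (all $\epsilon_i$ equal) is presumably what drives the strict-inequality refinement promised in the introduction.
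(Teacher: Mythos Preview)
Your proposal is correct and follows essentially the same approach as the paper: both rewrite the two expected errors in terms of the residuals $\epsilon_i$ so that the TTA integrand is $(\tfrac{1}{m}\sum_i \epsilon_i)^2$ and the average-error integrand is $\tfrac{1}{m}\sum_i \epsilon_i^2$, and then conclude. If anything, you are more explicit than the paper, which stops at the two integral expressions and leaves the pointwise Jensen/variance step implicit.
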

\begin{proof}
From the definition, the ordinal expected average error is calculated as
\begin{align}
    \bar{\mathcal{R}}^\ell(h) &= \int_{\mathcal{X}\times\mathcal{Y}}\frac{1}{m}\sum^m_{i=1}(y - h(\bm{x};\bm{\theta}_i))^2 p(\bm{x}, y)d\bm{x}dy \\
    &= \int_{\mathcal{X}\times\mathcal{Y}}\frac{1}{m}\sum^m_{i=1}\epsilon(\bm{x},y;\bm{\theta}_i)^2p(\bm{x}, y)d\bm{x}dy. \label{eq:erm_error_mse}
\end{align}
On the other hand, the expected error of TTA is
\begin{align}
    \mathcal{R}^{\ell,\mathcal{G}}(h) &= \int_{\mathcal{X}\times\mathcal{Y}}\Biggl(y - \frac{1}{m}\sum^m_{i=1}f\circ g_i(\bm{x}) \Biggr)^2 p(\bm{x},y)d\bm{x}dy \nonumber \\
    &= \int_{\mathcal{X}\times\mathcal{Y}}\Biggl(\frac{1}{m}\sum^m_{i=1}\Big(y - f\circ g_i(\bm{x})\Big)\Biggr)^2 p(\bm{x}, y)d\bm{x}dy \\
    &= \int_{\mathcal{X}\times\mathcal{Y}}\Biggl(\frac{1}{m}\sum^m_{i=1}\epsilon(\bm{x},y;\bm{\theta}_i)\Biggr)^2 p(\bm{x}, y)d\bm{x}dy. \label{eq:tta_error_mse}
\end{align}
Then, from Eq.~\eqref{eq:erm_error_mse} and \eqref{eq:tta_error_mse}, we have the proof of the theorem.
\end{proof}
By making further assumptions, we also have the following theorem.
\begin{theorem}
Assume that $f\circ g \in \mathcal{H}$ for all $f\in\mathcal{F}$ and $g\in\mathcal{G}$, and $\mathcal{G}$ contains the identity transformation $g:\bm{x}\mapsto\bm{x}$.
Assume also that each $\epsilon$ has mean zero and is uncorrelated with each other:
\begin{align}
    &\int_{\mathcal{X}\times\mathcal{Y}}\epsilon(\bm{x}, y; \bm{\theta}_i)p(\bm{x}, y)d\bm{x}dy = 0 \quad (\forall i\in\{1,\dots,m\}), \label{eq:mean_zero} \\
    &\int_{\mathcal{X}\times\mathcal{Y}}\epsilon(\bm{x}, y; \bm{\theta}_i)\epsilon(\bm{x}, y; \bm{\theta}_j)p(\bm{x}, y)d\bm{x}dy = 0 \quad (i \neq j). \label{eq:uncorrelated}
\end{align}
In this case, the following relationship holds
\begin{equation}
    \mathcal{R}^{\ell,\mathcal{G}}(h) = \frac{1}{m}\bar{\mathcal{R}}^\ell(h) < \bar{\mathcal{R}}^\ell(h) .
\end{equation}
\end{theorem}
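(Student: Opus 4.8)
The plan is to build directly on the computation already carried out in the proof of the preceding theorem, where we obtained the two identities
\begin{equation*}
\bar{\mathcal{R}}^\ell(h) = \int_{\mathcal{X}\times\mathcal{Y}}\frac{1}{m}\sum_{i=1}^m \epsilon(\bm{x},y;\bm{\theta}_i)^2 p(\bm{x},y)d\bm{x}dy,
\end{equation*}
\begin{equation*}
\mathcal{R}^{\ell,\mathcal{G}}(h) = \int_{\mathcal{X}\times\mathcal{Y}}\Biggl(\frac{1}{m}\sum_{i=1}^m \epsilon(\bm{x},y;\bm{\theta}_i)\Biggr)^2 p(\bm{x},y)d\bm{x}dy .
\end{equation*}
So the whole task reduces to evaluating the second integral under the extra hypotheses \eqref{eq:mean_zero}--\eqref{eq:uncorrelated}.

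First I would expand the square, separating the diagonal from the off-diagonal contributions,
\begin{equation*}
\Biggl(\frac{1}{m}\sum_{i=1}^m \epsilon_i\Biggr)^2 = \frac{1}{m^2}\sum_{i=1}^m \epsilon_i^2 + \frac{1}{m^2}\sum_{i\neq j}\epsilon_i\epsilon_j ,
\end{equation*}
writing $\epsilon_i \coloneqq \epsilon(\bm{x},y;\bm{\theta}_i)$ for brevity. Since the sum is finite I can interchange summation and integration and integrate each term against $p(\bm{x},y)$. By the uncorrelatedness assumption \eqref{eq:uncorrelated}, every off-diagonal integral $\int \epsilon_i\epsilon_j\, p\, d\bm{x}dy$ with $i\neq j$ vanishes (assumption \eqref{eq:mean_zero} being what makes these genuine covariances), leaving only the diagonal part
\begin{equation*}
\mathcal{R}^{\ell,\mathcal{G}}(h) = \frac{1}{m^2}\sum_{i=1}^m \int_{\mathcal{X}\times\mathcal{Y}} \epsilon_i^2\, p(\bm{x},y)d\bm{x}dy = \frac{1}{m}\cdot\frac{1}{m}\sum_{i=1}^m \int_{\mathcal{X}\times\mathcal{Y}} \epsilon_i^2\, p(\bm{x},y)d\bm{x}dy = \frac{1}{m}\bar{\mathcal{R}}^\ell(h),
\end{equation*}
where the last equality is precisely the first identity displayed above. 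This yields the claimed equality $\mathcal{R}^{\ell,\mathcal{G}}(h) = \tfrac{1}{m}\bar{\mathcal{R}}^\ell(h)$.

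For the strict inequality it then suffices to observe that $m \geq 2$ (TTA is only meaningful when at least one non-trivial augmentation is combined with the identity) and that $\bar{\mathcal{R}}^\ell(h) > 0$ whenever $h$ is not an exact predictor, i.e.\ not all $\epsilon_i$ vanish $p$-almost everywhere; under these mild non-degeneracy conditions $\tfrac{1}{m}\bar{\mathcal{R}}^\ell(h) < \bar{\mathcal{R}}^\ell(h)$.

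The computation itself presents no genuine obstacle: the only points requiring care are the degenerate corner cases that must be excluded for the strict inequality ($m=1$, or a model with zero error), together with the integrability of each $\epsilon_i^2$ needed to split the integral term by term. I would simply record these as standing assumptions. The single mildly delicate modelling point is that the $\bm{\theta}_i$ appearing in $\bar{\mathcal{R}}^\ell(h)$ must be identified with the parameters of the composed hypotheses $f\circ g_i$ in the same way as in the previous theorem, so that the two displayed identities really do refer to the same collection of residuals $\epsilon_i$.
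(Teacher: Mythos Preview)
Your proposal is correct and follows exactly the approach the paper intends: it invokes the two identities from the preceding theorem (Eqs.~\eqref{eq:erm_error_mse} and \eqref{eq:tta_error_mse}), expands the square, and uses \eqref{eq:uncorrelated} to eliminate the cross terms. In fact you supply more detail than the paper, whose proof consists of a single sentence pointing to those same equations and assumptions; your remarks on the non-degeneracy conditions needed for the strict inequality are a welcome addition that the paper leaves implicit.
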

\begin{proof}
From the assumptions~\eqref{eq:mean_zero}, ~\eqref{eq:uncorrelated} and Eq.~\eqref{eq:erm_error_mse} and \eqref{eq:tta_error_mse}, the proof of the theorem can be obtained immediately.
\end{proof}

\subsection{Weighted averaging for the TTA}
We consider the generalization of TTA as follows.
\begin{definition}{(Weighted averaging for the TTA)}
Let $\mathcal{F}=\{f(\bm{x};\bm{\theta}_{\mathcal{F}})\ |\ \bm{\theta}_{\mathcal{F}}\in\Theta_{\mathcal{F}}\subset\Theta\}\subset\mathcal{H}$ be a subset of the hypothesis class and $\mathcal{G} = \{g(\bm{x};\bm{\xi}):\mathcal{X}\to\bar{\mathcal{X}}\ |\ \bm{\xi}\in\Xi \}$ be the transformation class.
We assume that $\{g_i = g(\bm{x};\bm{\xi}_i)\}^m_{i=1}$ is a set of the data augmentation strategies, and the TTA output $\tilde{y}$ for the input $\bm{x}$ is calculated as
\begin{equation}
    \tilde{y}_w(\bm{x}, \{\bm{\xi}^m_{i=1}\}) \coloneqq \sum^m_{i=1}w_i f\circ g_i (\bm{x}) = \sum^m_{i=1}w(\bm{\xi}_i) f(g(\bm{x};\bm{\xi}_i); \bm{\theta}_{\mathcal{F}}), \label{eq:wtta}
\end{equation}
where $w_i = w(\bm{\xi}_i): \Xi\to\mathbb{R}_+$ is the weighting function:
\begin{equation}
    w_i \geq 0\ (\forall i\in\{1,\dots,m\}), \quad \sum^m_{i=1}w_i = 1 \label{eq:weights_condition}
\end{equation}
\end{definition}

Then, we can obtain the expected error of Eq.~\eqref{eq:wtta} as follows.

\begin{proposition}
The expected error of the weighted TTA is
\begin{equation}
    \label{prop:expected_error_tta}
    \mathcal{R}^{\ell,\mathcal{G},w}(h) = \sum^m_{i=1}\sum^m_{j=i}w_iw_j\Gamma_{ij},
\end{equation}
where
\begin{equation}
    \Gamma_{ij} = \int_{\mathcal{X}\times\mathcal{Y}}\Big(y - f\circ g_i(\bm{x})\Big)\Big(y - f\circ g_j(\bm{x}) \Big)p(\bm{x},y)d\bm{x}dy.
\end{equation}
\end{proposition}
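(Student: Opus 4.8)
The plan is to expand the definition of the weighted TTA expected error directly and recognize the resulting expression as a double sum. Starting from
\[
\mathcal{R}^{\ell,\mathcal{G},w}(h) = \int_{\mathcal{X}\times\mathcal{Y}}\Bigl(y - \tilde{y}_w(\bm{x},\{\bm{\xi}^m_{i=1}\})\Bigr)^2 p(\bm{x},y)\,d\bm{x}\,dy,
\]
I would use the weight normalization $\sum_{i=1}^m w_i = 1$ from Eq.~\eqref{eq:weights_condition} to write $y = \sum_{i=1}^m w_i y$, so that the residual becomes $\sum_{i=1}^m w_i (y - f\circ g_i(\bm{x}))$. This is the key algebraic move: it puts the whole integrand into the form $\bigl(\sum_i w_i r_i(\bm{x},y)\bigr)^2$ with $r_i(\bm{x},y) \coloneqq y - f\circ g_i(\bm{x})$.

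Next I would square out the sum, $\bigl(\sum_i w_i r_i\bigr)^2 = \sum_{i,j} w_i w_j r_i r_j$, push the integral inside the finite sums by linearity, and identify $\int_{\mathcal{X}\times\mathcal{Y}} r_i(\bm{x},y) r_j(\bm{x},y) p(\bm{x},y)\,d\bm{x}\,dy$ with $\Gamma_{ij}$ as defined in the statement. That yields $\mathcal{R}^{\ell,\mathcal{G},w}(h) = \sum_{i=1}^m \sum_{j=1}^m w_i w_j \Gamma_{ij}$. The only remaining point is to collapse the full double sum to the stated upper-triangular form $\sum_{i=1}^m \sum_{j=i}^m w_i w_j \Gamma_{ij}$; since $\Gamma_{ij} = \Gamma_{ji}$ by symmetry of the integrand, the off-diagonal terms pair up, and one absorbs the factor of $2$ into the convention (or, more precisely, the stated proposition should be read with the understanding that the cross terms carry this symmetry). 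I would note this symmetry explicitly so the index-range reduction is justified rather than asserted.

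I do not expect a genuine obstacle here — the result is essentially a bookkeeping identity once the normalization trick is applied, exactly paralleling the manipulation already used in Eq.~\eqref{eq:tta_error_mse} for the unweighted case. The one place to be careful is the passage from the symmetric double sum $\sum_{i,j} w_i w_j \Gamma_{ij}$ to the upper-triangular one; I would make sure the factor-of-two on the $i\neq j$ terms is handled consistently with how $\Gamma_{ij}$ is being counted, and flag that $\ell(a,b)=(a-b)^2$ is in force (inherited from the earlier subsection) so that the squared-residual expansion is valid. With those remarks in place the proof is a two-line computation.
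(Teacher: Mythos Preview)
Your proposal is correct and follows essentially the same route as the paper: expand the squared residual, use linearity of the integral, and identify the cross terms with $\Gamma_{ij}$. You are in fact more careful than the paper, which silently passes from the product form to $\sum_{i}\sum_{j=i} w_i w_j \Gamma_{ij}$ without commenting on the symmetry or the factor of two; your remark that the upper-triangular form requires absorbing the factor $2$ on off-diagonal terms (or else should really read $\sum_{i}\sum_{j=1}^m$, as the paper itself writes later in the optimal-weights derivation) is well placed.
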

\begin{proof}
We can calculate as
\begin{align}
    \mathcal{R}^{\ell, \mathcal{G}, w}(h) &= \int_{\mathcal{X}\times\mathcal{Y}}\Biggl(y - \sum^m_{i=1}w_i f\circ g_i(\bm{x})\Biggr)^2 p(\bm{x}, y)d\bm{x}dy \nonumber \\
    &= \int_{\mathcal{X}\times\mathcal{Y}}\Biggl(y - \sum^m_{i=1}w_i f\circ g_i(\bm{x})\Biggr)\Biggl(y - \sum^m_{j=1}w_j f\circ g_j(\bm{x})\Biggr)p(\bm{x},y)d\bm{x}dy \nonumber \\
    &= \sum^m_{i=1}\sum^m_{j=i}w_iw_j\Gamma_{ij}. \label{eq:generalized_tta_error}
\end{align}
\end{proof}
Proposition~\ref{prop:expected_error_tta} implies that the expected error of the weighted TTA is highly depending on the correlations of $\{g_1,\dots,g_m\}^m_{i=1}$.

\begin{theorem}{(Optimal weights for the weighted TTA)}
\label{thm:optimal_weights}
We can obtain the optimal weights $\bm{w} = \{w_i,\dots,w_j\}$ for the weighted TTA as follows:
\begin{equation}
    w_i = \frac{\sum^m_{j=1}\Gamma^{-1}_{ij}}{\sum^m_{k=1}\sum^m_{j=1}\Gamma^{-1}_{kj}},
\end{equation}
where $\Gamma^{-1}_{ij}$ is the $(i,j)$-element of the inverse matrix of $(\Gamma_{ij})$.
\begin{proof}
The optimal weights can be obtained by solving
\begin{equation}
    \bm{w} = \argmin_{\bm{w}}\sum^m_{i=1}\sum^m_{j=1}w_iw_j\Gamma_{ij}.
\end{equation}
Then, from the method of Lagrange multiplier,
\begin{align}
    \frac{\partial}{\partial w_k}\Biggl\{\sum^m_{i=1}\sum^m_{j=1}w_iw_j\Gamma_{ij} - 2\lambda\Big(\sum^m_{i=1}w_i - 1\Big)\Biggr\} &= 0 \\
    2\sum^m_{j=1}w_k\Gamma_{kj} &= 2\lambda \\
    \sum^m_{j=1}w_k\Gamma_{kj} &= \lambda.
\end{align}
From the condition~\eqref{eq:weights_condition}, since $\sum^m_{i=1}w_i = 1$ and then, we have
\begin{equation}
    w_i = \frac{\sum^m_{j=1}\Gamma^{-1}_{ij}}{\sum^m_{k=1}\sum^m_{j=1}\Gamma^{-1}_{kj}}.
\end{equation}
\end{proof}
\end{theorem}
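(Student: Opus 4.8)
The plan is to treat this as a textbook equality-constrained quadratic minimisation. By Proposition~\ref{prop:expected_error_tta}, together with the symmetry $\Gamma_{ij}=\Gamma_{ji}$ (immediate from the definition of $\Gamma_{ij}$), the objective is the quadratic form $\mathcal{R}^{\ell,\mathcal{G},w}(h)=\bm{w}^{\top}\Gamma\bm{w}$, where $\bm{w}=(w_1,\dots,w_m)^{\top}$ and $\Gamma=(\Gamma_{ij})$, and the feasible set is the affine hyperplane $\{\bm{w}:\bm{1}^{\top}\bm{w}=1\}$ with $\bm{1}=(1,\dots,1)^{\top}$. So the task is simply to carry out the Lagrange-multiplier computation indicated in the statement and then solve the resulting linear system in closed form.

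First I would form the Lagrangian $L(\bm{w},\lambda)=\bm{w}^{\top}\Gamma\bm{w}-2\lambda(\bm{1}^{\top}\bm{w}-1)$ and set its gradient in $\bm{w}$ to zero; since $\Gamma$ is symmetric this yields the stationarity condition $\Gamma\bm{w}=\lambda\bm{1}$, i.e.\ componentwise $\sum_{j=1}^{m}\Gamma_{kj}w_j=\lambda$ for every $k$. Assuming $\Gamma$ is invertible — which holds whenever the augmented predictors $f\circ g_1,\dots,f\circ g_m$ are not affinely degenerate, so that the residual cross-moment matrix is nonsingular — this gives $\bm{w}=\lambda\,\Gamma^{-1}\bm{1}$. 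Next I would pin down $\lambda$ from the constraint: substituting into $\bm{1}^{\top}\bm{w}=1$ gives $\lambda\,\bm{1}^{\top}\Gamma^{-1}\bm{1}=1$, hence $\lambda=1/(\bm{1}^{\top}\Gamma^{-1}\bm{1})=1/\sum_{k=1}^{m}\sum_{j=1}^{m}\Gamma^{-1}_{kj}$. Plugging back in, $w_i=\lambda\sum_{j=1}^{m}\Gamma^{-1}_{ij}=\dfrac{\sum_{j=1}^{m}\Gamma^{-1}_{ij}}{\sum_{k=1}^{m}\sum_{j=1}^{m}\Gamma^{-1}_{kj}}$, which is exactly the claimed formula.

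To finish, I would confirm that this critical point is genuinely the minimiser and not a saddle. Observe that $\Gamma$ is positive semidefinite, since $\bm{v}^{\top}\Gamma\bm{v}=\mathbb{E}\big[(\sum_i v_i(y-f\circ g_i(\bm{x})))^2\big]\ge 0$; under the non-degeneracy assumption it is in fact positive definite, so the objective is strictly convex and its unique stationary point on the affine constraint set is the global minimum, and moreover $\bm{1}^{\top}\Gamma^{-1}\bm{1}>0$ so the normalisation is well defined.

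The only nontrivial point in an otherwise routine argument is the invertibility of $\Gamma$: the closed form is meaningless if the cross-moment matrix is singular, which occurs precisely when some nontrivial linear combination of the residuals $y-f\circ g_i(\bm{x})$ vanishes almost surely. I would therefore record this non-degeneracy as a standing hypothesis (or, alternatively, replace $\Gamma^{-1}$ by a pseudoinverse and note the formula still attains the minimum, at the cost of uniqueness). Everything else is bookkeeping with the Lagrangian and the scalar normalisation.
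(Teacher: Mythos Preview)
Your argument is correct and follows essentially the same Lagrange-multiplier route as the paper: set up the constrained quadratic, differentiate, obtain $\Gamma\bm{w}=\lambda\bm{1}$, invert, and normalise via $\sum_i w_i=1$. Your version is in fact more careful than the paper's, which neither verifies that the stationary point is a minimum nor flags the invertibility hypothesis on $\Gamma$; those additions are welcome but do not change the underlying strategy.
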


From Theorem~\ref{thm:optimal_weights}, we obtain a closed-form expression for the optimal weights of the weighted TTA.
Furthermore, we see that this solution requires an invertible correlation matrix $\Gamma$.
However, in TTA we consider the set of $\{f\circ g_i\}^m_{i=1}$, and all elements depend on $f\in\mathcal{F}$ in common.
This means that the correlations among $\{f\circ g_i\}^m_{i=1}$ will be very high, and such correlation matrix is generally known to be singular or ill-conditioned.

\subsection{Existence of the unnecessary transformation functions}
To simplify the discussion, we assume that all weights are equal.
Then, from Eq.~\eqref{eq:generalized_tta_error}, we have
\begin{equation}
    \mathcal{R}^{\ell, \mathcal{G}, w}(h) = \sum^m_{i=1}\sum^m_{j=1}\Gamma_{ij} / m^2.
\end{equation}
If we remove $g_k$ from $\{g_1,\dots,g_m\}$, the error $\tilde{\mathcal{R}}^{\ell, \mathcal{G}, w}(h)$ is recomputed as follows.
\begin{equation}
    \tilde{\mathcal{R}}^{\ell, \mathcal{G}, w}(h) = \sum^m_{\substack{i=1\\i\neq k}}\sum^m_{\substack{j=1\\j\neq k}}\Gamma_{ij} / (m-1)^2.
\end{equation}
Here we consider how the error of the TTA changes when we remove the $k$-th data augmentation.
If we assume that $\mathcal{R}^{\ell,\mathcal{G},w}(h)$ is greater than or equal to $\tilde{\mathcal{R}}^{\ell, \mathcal{G}, w}(h)$, then
\begin{equation}
    (2m-1)\sum^m_{i=1}\sum^m_{j=1}\Gamma_{ij} \leq 2m^2\sum^m_{\substack{i=1 \\ i \neq k}}\Gamma_{ik} + m^2\Gamma_{kk}. \label{eq:tta_pruning}
\end{equation}
From the above equation, we can see that the group of data augmentations with very high correlation is redundant, except for some of them.
Fig.~\ref{fig:tta_pruning} shows an example of a numerical experiment to get the probability that Eq.~\eqref{eq:tta_pruning} holds.
In this numerical experiment, we generated a sequence of random values with the specified correlation to obtain a pseudo $(\Gamma_{ij})$, and calculated the probability that Eq.~\eqref{eq:tta_pruning} holds out of $100$ trials.
From this plot, we can see that $(\Gamma_{ij})$ with high correlation is likely to have redundancy.
In the following, we introduce ambiguity as another measure of redundancy and show that this measure is highly related to the error of TTA.

\begin{figure}[t]
    \centering
    \includegraphics[width=1.0\linewidth]{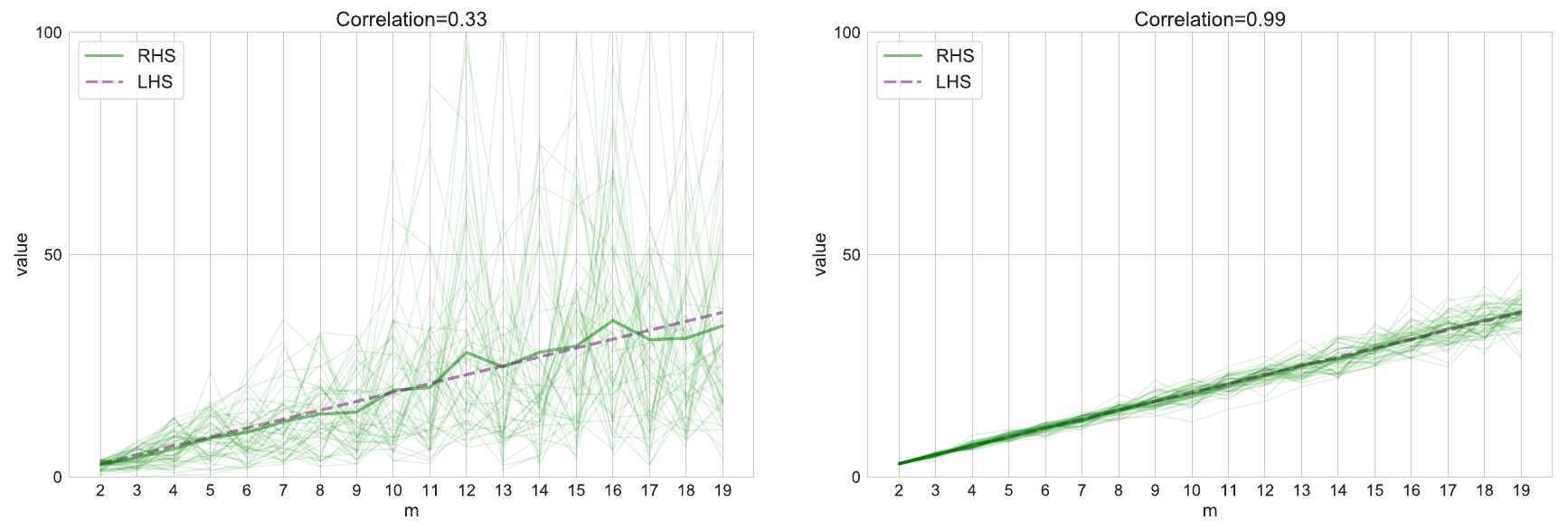}
    \caption{$(2m-1)\sum^m_{i=1}\sum^m_{j=1}\Gamma_{ij}=$LHS vs RHS$=2m^2\sum^m_{i \neq k}\Gamma_{ik} + m^2\Gamma_{kk}$ (Eq.~\eqref{eq:tta_pruning}).
    When the correlation is $0.33$, the numerical calculation yields $\Pr(RHS \geq LHS) \approx 0.38$.
    On the other hand, when the correlation is $0.99$, we yields $\Pr(RHS \geq LHS) \approx 0.49$.}
    \label{fig:tta_pruning}
\end{figure}

\subsection{Error decomposition for the TTA}
Knowing what elements the error can be broken down into is one important way to understand the behavior of TTA.
For this purpose, we introduce the following notion of ambiguity.
\begin{definition}{(Ambiguity of the hypothesis set~\cite{krogh1995validation})}
\label{def:ambiguity}
For some $\bm{x}\in\mathcal{X}$, the ambiguity $\varsigma(h_i | \bm{x})$ of the hypothesis set $h = \{h_i\}^m$ is defined as
\begin{equation}
    \varsigma(h_i | \bm{x}) \coloneqq \Biggl(h_i(\bm{x}) - \sum^m_{i=1}w_i h_i(\bm{x})\Biggr)^2 \quad (\forall i\in\{1,\dots,m\}).
\end{equation}
\end{definition}
Let $\bar{\varsigma}(h|\bm{x})$ be the average ambiguity: $\bar{\varsigma}(h|\bm{x})=\sum^m_{i=1}w_i\varsigma(h_i | \bm{x})$.
From Definition~\ref{def:ambiguity}, the ambiguity term can be regarded as a measure of the discrepancy between individual hypotheses for input $\bm{x}$.
Then, we have
\begin{align}
    \bar{\varsigma}(h|\bm{x}) = \sum^m_{i=1}w_i (y - f\circ g_i(\bm{x}))^2 - (y - \sum^m_{i=1}w_i f\circ g_i(\bm{x}))^2. \label{eq:ambiguity_error}
\end{align}
Since Eq.~\eqref{eq:ambiguity_error} holds for all $\bm{x}\in\mathcal{X}$,
\begin{align}
    &\sum^m_{i=1}w_i \int_{\mathcal{X}\times\mathcal{Y}}\varsigma(h_i|\bm{x})p(\bm{x}, y)d\bm{x}dy \nonumber \\
    & = \sum^m_{i=1}w_i\int (y - f\circ g_i(\bm{x}))^2 p(\bm{x},y)d\bm{x}dy - \int \Big(y - \sum^m_{i=1}w_i f\circ g_i(\bm{x})\Big)^2p(\bm{x},y)d\bm{x}dy. \nonumber
\end{align}
Let
\begin{align}
    err(f\circ g_i) = \mathbb{E}\Big[(y - f\circ g_i(\bm{x}))^2\Big] = \int (y - f\circ g_i(\bm{x}))^2p(\bm{x})d\bm{x}dy,
\end{align}
and
\begin{align}
    \varsigma(f\circ g_i) = \mathbb{E}\Big[\varsigma(f\circ g_i | \bm{x})\Big] = \int \varsigma(f\circ g_i | \bm{x})p(\bm{x})d\bm{x}.
\end{align}
Then, we have
\begin{equation}
    \mathcal{R}^{\ell,\mathcal{G},w}(h) = \sum^m_{i=1}w_i\cdot err(f\circ g_i) - \sum^m_{i=1}w_i\cdot \varsigma(f\circ g_i)),
\end{equation}
where the first term corresponds to the error, and the second term corresponds to the ambiguity.
From this equation, it can be seen that TTA yields significant benefits when each $f\circ g_i$ is more accurate and more diverse than the other.

To summarize, we have the following proposition.
\begin{proposition}
The error of the TTA can be decomposed as
\begin{equation}
    \mathcal{R}^{\ell,\mathcal{G},w}(h) = \Big[ \text{errors of $f\circ g_i$}\Big] + \Big[ \text{ambiguities of $f\circ g_i$}\Big].
\end{equation}
\end{proposition}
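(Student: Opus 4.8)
The plan is to obtain this decomposition directly from the Krogh--Vedelsby ambiguity identity, formalizing the computation sketched just above Eq.~\eqref{eq:ambiguity_error}. First I would fix an arbitrary $\bm{x}\in\mathcal{X}$ and abbreviate $\bar h(\bm{x}) \coloneqq \sum_{i=1}^m w_i\, f\circ g_i(\bm{x})$, so that the weighted TTA prediction equals $\bar h(\bm{x})$. Expanding the definition of the average ambiguity gives
\begin{align*}
\bar\varsigma(h\mid\bm{x}) &= \sum_{i=1}^m w_i\bigl(f\circ g_i(\bm{x}) - \bar h(\bm{x})\bigr)^2 \\
&= \sum_{i=1}^m w_i\,(f\circ g_i(\bm{x}))^2 - 2\bar h(\bm{x})\sum_{i=1}^m w_i\, f\circ g_i(\bm{x}) + \bar h(\bm{x})^2\sum_{i=1}^m w_i,
\end{align*}
and invoking $\sum_{i=1}^m w_i = 1$ from Eq.~\eqref{eq:weights_condition} together with $\sum_{i=1}^m w_i\, f\circ g_i(\bm{x}) = \bar h(\bm{x})$ collapses the right-hand side to $\sum_{i=1}^m w_i\,(f\circ g_i(\bm{x}))^2 - \bar h(\bm{x})^2$. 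Adding and subtracting $y^2$ and $2y\,\bar h(\bm{x})$, again using $\sum_i w_i = 1$, rewrites this as $\sum_{i=1}^m w_i\,(y - f\circ g_i(\bm{x}))^2 - (y - \bar h(\bm{x}))^2$, which is precisely Eq.~\eqref{eq:ambiguity_error}.

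The second step is to integrate this pointwise identity against $p(\bm{x},y)$. Since the identity holds for every $\bm{x}$ and all terms are integrable under the squared-loss setup, I would apply $\int_{\mathcal{X}\times\mathcal{Y}}(\cdot)\,p(\bm{x},y)\,d\bm{x}\,dy$ to both sides and use linearity. By the definition of the weighted TTA expected error, $\int (y-\bar h(\bm{x}))^2 p(\bm{x},y)\,d\bm{x}\,dy = \mathcal{R}^{\ell,\mathcal{G},w}(h)$; pulling the constants $w_i$ out of the integral, the remaining pieces are exactly $\sum_i w_i\, err(f\circ g_i)$ and $\sum_i w_i\,\varsigma(f\circ g_i)$. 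Rearranging yields
\[
\mathcal{R}^{\ell,\mathcal{G},w}(h) = \sum_{i=1}^m w_i\, err(f\circ g_i) - \sum_{i=1}^m w_i\,\varsigma(f\circ g_i),
\]
i.e. the TTA error splits into the weighted average of the individual errors of the $f\circ g_i$ and (minus) their weighted average ambiguity, which is the claimed decomposition.

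The hard part will not be analytic --- there is no real obstacle --- but rather careful bookkeeping: the cross terms in the expansion of the squares must be handled correctly, and the normalization $\sum_i w_i = 1$ must be invoked each time a quantity that is constant in $i$ is moved in or out of the weighted sum. The one point I would make explicit is that $err$ and $\varsigma$ must be understood as expectations with respect to the same $p(\bm{x},y)$ used in the definition of $\mathcal{R}^{\ell,\mathcal{G},w}$, so that the integrated identity matches term by term; I would also note the sign convention, so that the ``$+$'' appearing in the proposition is read as the (negative) ambiguity contribution, consistent with the displayed formula immediately preceding it.
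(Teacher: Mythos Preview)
Your proposal is correct and follows essentially the same route as the paper: establish the pointwise Krogh--Vedelsby identity of Eq.~\eqref{eq:ambiguity_error}, integrate against $p(\bm{x},y)$, and rearrange. You simply supply more algebraic detail for the expansion leading to Eq.~\eqref{eq:ambiguity_error} than the paper does, and you are right to flag the sign convention in the proposition's ``$+$''.
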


\subsection{Statistical consistency}
Finally, we discuss the statistical consistency for the TTA procedure.

\begin{definition}
The ERM is the strictly consistent if for any non-empty subset $\mathcal{H}(c) = \{h\in\mathcal{H}: \mathcal{R}^\ell(h) \geq c\}$ with $c\in(-\infty, +\infty)$ the following convergence holds:
\begin{equation}
    \inf_{h\in\mathcal{H}(c)}\hat{\mathcal{R}}^\ell_{\mathcal{S}}(h) \overset{p}{\to} \inf_{h\in\mathcal{H}(c)}\mathcal{R}^\ell(h) \quad (N\to\infty).
\end{equation}
\end{definition}
Necessary and sufficient conditions for strict consistency are provided by the following theorem~\cite{vapnik1999overview,vapnik2013nature}.
\begin{theorem}
If two real constants $a\in\mathbb{R}$ and $A\in\mathbb{R}$ can be found such that for every $h\in\mathcal{H}$ the inequalities $a \leq \mathcal{R}^\ell(h) \leq A$ hold, then the following two statements are equivalent:
\begin{enumerate}
    \item The empirical risk minimization is strictly consistent on the set of functions $\{\ell(y,h(\bm{x}))\ |\ h\in\mathcal{H}\}$.
    \item  The uniform one-sided convergence of the mean to their expectation takes place over the set of functions $\{\ell(y,h(\bm{x}))\ |\ h\in\mathcal{H}\}$, i.e.,
    \begin{equation}
        \lim_{N\to\infty}\Pr\Big[\sup_{h\in\mathcal{H}}\Big\{\mathcal{R}^\ell(h) - \hat{\mathcal{R}}^\ell_{\mathcal{S}}(h)\Big\} > \epsilon \Big] = 0 \quad (\forall \epsilon > 0).
    \end{equation}
\end{enumerate}
\end{theorem}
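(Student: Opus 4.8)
The plan is to prove the two implications separately, following the route of the classical ``key theorem'' of Vapnik and Chervonenkis. The direction (2)$\Rightarrow$(1) is the easy one: it uses only uniform one-sided convergence together with the pointwise law of large numbers, which is available because the loss values lie in the bounded interval $[a,A]$. The direction (1)$\Rightarrow$(2) is the substantive part, and I would argue it by contraposition, exploiting the boundedness hypothesis to run a pigeonhole argument over a finite grid of threshold values $c$.

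For (2)$\Rightarrow$(1), fix a threshold $c$ with $\mathcal{H}(c)\neq\emptyset$ and write $r^\star=\inf_{h\in\mathcal{H}(c)}\mathcal{R}^\ell(h)$ and $\hat r_N=\inf_{h\in\mathcal{H}(c)}\hat{\mathcal{R}}^\ell_{\mathcal{S}}(h)$. I would bound $|\hat r_N-r^\star|$ from both sides. For the lower tail: if $\hat r_N<r^\star-\varepsilon$ then some $h\in\mathcal{H}(c)$ has $\hat{\mathcal{R}}^\ell_{\mathcal{S}}(h)<r^\star-\varepsilon\leq\mathcal{R}^\ell(h)-\varepsilon$ (since $r^\star$ lower-bounds $\mathcal{R}^\ell$ on $\mathcal{H}(c)$), so $\Pr[\hat r_N<r^\star-\varepsilon]\leq\Pr[\sup_{h\in\mathcal{H}}\{\mathcal{R}^\ell(h)-\hat{\mathcal{R}}^\ell_{\mathcal{S}}(h)\}>\varepsilon]\to 0$ by (2). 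For the upper tail: pick $h_\varepsilon\in\mathcal{H}(c)$ with $\mathcal{R}^\ell(h_\varepsilon)<r^\star+\varepsilon/2$; then $\hat r_N\leq\hat{\mathcal{R}}^\ell_{\mathcal{S}}(h_\varepsilon)\overset{p}{\to}\mathcal{R}^\ell(h_\varepsilon)<r^\star+\varepsilon/2$, so $\Pr[\hat r_N>r^\star+\varepsilon]\to 0$. The two tails together give $\hat r_N\overset{p}{\to}r^\star$, which is precisely strict consistency.

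For (1)$\Rightarrow$(2), suppose uniform one-sided convergence fails: there are $\varepsilon>0$, $\eta>0$ and infinitely many sample sizes $N$ along which $\Pr[\sup_{h\in\mathcal{H}}\{\mathcal{R}^\ell(h)-\hat{\mathcal{R}}^\ell_{\mathcal{S}}(h)\}>\varepsilon]\geq\eta$. Partition $[a,A]$ into $K=\lceil 2(A-a)/\varepsilon\rceil$ slabs of width at most $\varepsilon/2$, with endpoints $a=c_0<c_1<\cdots<c_K=A$. On the event $\{\sup_{h}\{\mathcal{R}^\ell(h)-\hat{\mathcal{R}}^\ell_{\mathcal{S}}(h)\}>\varepsilon\}$ there is a data-dependent $h$ with $\mathcal{R}^\ell(h)-\hat{\mathcal{R}}^\ell_{\mathcal{S}}(h)>\varepsilon$, and $\mathcal{R}^\ell(h)$ lands in some slab $[c_{k-1},c_k)$; by pigeonhole a single index $k^\star$ occurs, along infinitely many $N$, with probability at least $\eta/K$. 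On that event $h\in\mathcal{H}(c_{k^\star-1})$ while $\hat{\mathcal{R}}^\ell_{\mathcal{S}}(h)<\mathcal{R}^\ell(h)-\varepsilon<c_{k^\star}-\varepsilon\leq c_{k^\star-1}-\varepsilon/2$, so $\inf_{h\in\mathcal{H}(c_{k^\star-1})}\hat{\mathcal{R}}^\ell_{\mathcal{S}}(h)\leq c_{k^\star-1}-\varepsilon/2\leq\inf_{h\in\mathcal{H}(c_{k^\star-1})}\mathcal{R}^\ell(h)-\varepsilon/2$. A gap of size $\varepsilon/2$ with probability $\geq\eta/K>0$ for infinitely many $N$ contradicts the convergence in probability demanded by strict consistency on $\mathcal{H}(c_{k^\star-1})$; hence statement (1) forces uniform one-sided convergence.

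The main obstacle, and the only place where care is genuinely needed, is the (1)$\Rightarrow$(2) direction. One must ensure that $\sup_{h\in\mathcal{H}}\{\mathcal{R}^\ell(h)-\hat{\mathcal{R}}^\ell_{\mathcal{S}}(h)\}$ is a measurable function of the sample (the standard remedy is to restrict to a countable dense subclass, or to invoke the usual measurability conditions implicit in the statement), that the ``bad'' hypothesis realizing the $>\varepsilon$ gap can be selected measurably in $\mathcal{S}$ (a near-maximizer suffices, with an extra $\varepsilon/4$ of slack absorbed into the slab width), and that the passage from ``probability $\geq\eta/K$ along a subsequence'' to an outright contradiction with $\overset{p}{\to}$ is phrased correctly. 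Once these points are granted, the two one-sided estimates in (2)$\Rightarrow$(1) and the pigeonhole bookkeeping in (1)$\Rightarrow$(2) are routine.
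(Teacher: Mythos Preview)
The paper does not actually prove this theorem: it is stated as a quoted result and attributed to Vapnik~\cite{vapnik1999overview,vapnik2013nature}, with no argument given. Your proposal is essentially the classical Vapnik--Chervonenkis proof of the ``key theorem'' (one-sided uniform convergence $\Leftrightarrow$ strict consistency), and both directions are argued correctly.

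One small correction of phrasing: the hypothesis is that $a\leq\mathcal{R}^\ell(h)\leq A$, i.e.\ the \emph{expected} risks are bounded, not that the pointwise loss values $\ell(y,h(\bm{x}))$ lie in $[a,A]$. Your justification for the law of large numbers in the $(2)\Rightarrow(1)$ direction (``loss values lie in the bounded interval $[a,A]$'') therefore misreads the assumption. This does not damage the argument, since the weak law of large numbers for a fixed $h_\varepsilon$ requires only that $\ell(y,h_\varepsilon(\bm{x}))$ be integrable, which follows from $\mathcal{R}^\ell(h_\varepsilon)\leq A<\infty$ together with $\ell\geq 0$ (the paper takes $\ell:\mathcal{Y}\times\mathcal{Y}\to\mathbb{R}_+$). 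Likewise, in the $(1)\Rightarrow(2)$ direction your slab partition uses only the bound on $\mathcal{R}^\ell(h)$, which is exactly what is assumed, so that part is fine as written.
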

Using these concepts, we can derive the following lemma.
\begin{lemma}
\label{lem:consistency}
The empirical risk $\frac{1}{m}\sum^m_{i=1}\hat{\mathcal{R}}^{\ell,\mathcal{G}}_{\mathcal{S}}(h)$ obtained by ERM with data augmentations $\{g_1,\dots,g_m\}^m_{i=1}$ is the consistent estimator of $\mathbb{E}_{\bar{\mathcal{X}}\times\mathcal{Y}}\Big[\ell(y, f(\bm{x}))\Big]$, i.e.,
\begin{equation}
    \inf_{h\in\mathcal{H}(c)}\hat{\mathcal{R}}^{\ell,\mathcal{G}}_{\mathcal{S}}(h) \overset{p}{\to} \mathbb{E}_{\bar{\mathcal{X}}\times\mathcal{Y}}\Big[\ell(y, f(\bm{x}))\Big] \quad (N\to\infty).
\end{equation}
\end{lemma}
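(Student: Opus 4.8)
The plan is to reinterpret the augmented empirical risk as an ordinary empirical risk over the augmented input space $\bar{\mathcal{X}}$ and then to appeal to the Vapnik characterization of strict consistency quoted above.

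First I would make the estimator explicit. Expanding the average gives $\frac{1}{m}\sum_{i=1}^{m}\hat{\mathcal{R}}^{\ell,\mathcal{G}}_{\mathcal{S}}(h) = \frac{1}{mN}\sum_{i=1}^{m}\sum_{j=1}^{N}\ell\big(y_j, f(g_i(\bm{x}_j))\big)$, which is exactly the empirical risk of the base hypothesis $f\in\mathcal{F}\subset\mathcal{H}$ on the enlarged sample $\bar{\mathcal{S}} = \{(y_j, g_i(\bm{x}_j))\}_{i\le m,\, j\le N}$ taking values in $\bar{\mathcal{X}}\times\mathcal{Y}$. I would then equip $\bar{\mathcal{X}}\times\mathcal{Y}$ with the mixture law $\bar{p} = \frac{1}{m}\sum_{i=1}^{m}(T_i)_{*}p$, where $T_i:(\bm{x},y)\mapsto(g_i(\bm{x}),y)$, under which, for each fixed $j$, the block $\{(y_j, g_i(\bm{x}_j))\}_{i=1}^{m}$ is a single i.i.d. draw whose within-block loss average $\frac{1}{m}\sum_{i=1}^{m}\ell(y_j, f(g_i(\bm{x}_j)))$ has expectation exactly $\mathbb{E}_{\bar{\mathcal{X}}\times\mathcal{Y}}[\ell(y, f(\bm{x}))]$.

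Next, applying the law of large numbers to the $N$ i.i.d. block averages shows that the augmented empirical risk converges in probability to $\mathbb{E}_{\bar{\mathcal{X}}\times\mathcal{Y}}[\ell(y, f(\bm{x}))]$ as $N\to\infty$, for each fixed hypothesis; the dependence among the $m$ copies inside a block is harmless since only the $N$ blocks need be independent. To upgrade this pointwise statement to the convergence of $\inf_{h\in\mathcal{H}(c)}$ appearing in the lemma, I would invoke the boundedness hypothesis of the quoted theorem, namely $a\le\mathcal{R}^{\ell,\mathcal{G}}(h)\le A$ on $\bar{\mathcal{X}}\times\mathcal{Y}$. Since $f\circ g\in\mathcal{H}$ for all $f\in\mathcal{F}$ and $g\in\mathcal{G}$ by assumption, the loss family on the augmented domain is a sub-family of $\{\ell(y, h(\bm{x}))\mid h\in\mathcal{H}\}$, so the uniform one-sided convergence assumed over $\mathcal{H}$ restricts to it; by the equivalence $(1)\Leftrightarrow(2)$ of that theorem, ERM is strictly consistent on $\bar{\mathcal{X}}\times\mathcal{Y}$, and then the definition of strict consistency, together with the fact that the infimum of the augmented expected risk over the relevant subclass is attained at the base predictor $f$, gives $\inf_{h\in\mathcal{H}(c)}\hat{\mathcal{R}}^{\ell,\mathcal{G}}_{\mathcal{S}}(h)\overset{p}{\to}\mathbb{E}_{\bar{\mathcal{X}}\times\mathcal{Y}}[\ell(y, f(\bm{x}))]$.

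The step I expect to be the main obstacle is the first one: pinning down the probability measure on $\bar{\mathcal{X}}$ so that the enlarged sample genuinely behaves like an i.i.d. sample and so that the target $\mathbb{E}_{\bar{\mathcal{X}}\times\mathcal{Y}}[\ell(y, f(\bm{x}))]$ in the statement is the correct one. Because $\bar{\mathcal{X}}$ was defined as an uncountable union of images of $\mathcal{X}$ under all of $\mathcal{G}$, the choice between the finite mixture $\bar{p}$ used above and a measure supported on the whole of $\bar{\mathcal{X}}$ is a modelling decision that must be made compatibly with the earlier definition of $\mathcal{R}^{\ell,\mathcal{G}}$; once this measure-theoretic bookkeeping is settled, the law-of-large-numbers and Vapnik-theorem steps are routine.
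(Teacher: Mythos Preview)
Your approach shares the paper's core move: rewrite the augmented empirical risk as an ordinary empirical risk on $\bar{\mathcal{X}}\times\mathcal{Y}$ via the change-of-domain identification. The paper's argument is much terser than yours; it only computes
\[
\mathbb{E}_{\mathcal{X}\times\mathcal{Y}}\bigl[\hat{\mathcal{R}}^{\ell,\mathcal{G}}_{\mathcal{S}}(h)\bigr]
=\int_{\mathcal{X}\times\mathcal{Y}}\frac{1}{Nm}\sum_{i,j}\ell\bigl(y,f\circ g_j(\bm{x})\bigr)\,p(\bm{x},y)\,d\bm{x}\,dy
=\int_{\bar{\mathcal{X}}\times\mathcal{Y}}\ell\bigl(y,f(\bm{x})\bigr)\,p(\bm{x},y)\,d\bm{x}\,dy,
\]
i.e.\ it establishes unbiasedness and stops, leaving both the convergence in probability and the passage through $\inf_{h\in\mathcal{H}(c)}$ implicit. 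Your proposal is more complete on exactly these points: you construct the mixture law $\bar p=\frac{1}{m}\sum_i (T_i)_{*}p$ explicitly, apply the law of large numbers to the $N$ i.i.d.\ block averages, and then invoke the Vapnik equivalence to upgrade to the infimum. So you have not taken a different route so much as filled in steps the paper omits. One caveat: your final clause, that the infimum of the augmented expected risk over the relevant subclass is attained at the base predictor $f$, is asserted rather than argued, and the lemma as stated is itself ambiguous here (the right-hand side fixes a single $f$ while the left-hand side ranges over $\mathcal{H}(c)$); the paper does not resolve this either. Your flagging of the measure on $\bar{\mathcal{X}}$ as the genuine bookkeeping issue is on target---the paper handles it by silently swapping the integration domain from $\mathcal{X}\times\mathcal{Y}$ to $\bar{\mathcal{X}}\times\mathcal{Y}$ without ever naming the new density.
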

\begin{proof}
Let $\bar{\mathcal{X}}$ be the augmented input space with $\{g_1,\dots,g_m\}^m_{i=1}$.
Then, we have
\begin{align}
    \mathbb{E}_{\mathcal{X}\times\mathcal{Y}}\Big[\hat{\mathcal{R}}^{\ell,\mathcal{G}}_{\mathcal{S}}(h)\Big] &= \int_{\mathcal{X}\times\mathcal{Y}}\Biggl\{\frac{1}{N}\sum^N_{i=1}\frac{1}{m}\sum^m_{j=1}\ell(y, f\circ g_j(\bm{x})) \Biggr\} p(\bm{x},y)d\bm{x}dy \\
    &= \int_{\mathcal{X}\times\mathcal{Y}}\Biggl\{\frac{1}{Nm}\sum^N_{i=1}\sum^m_{j=1}\ell(y, f\circ g_j(\bm{x})) \Biggr\} p(\bm{x},y)d\bm{x}dy \\
    &= \int_{\bar{\mathcal{X}}\times\mathcal{Y}}\Biggl\{\frac{1}{Nm}\sum^{Nm}_{i=1}\ell(y, f(\bm{x}))\Biggr\}p(\bm{x}, y)d\bm{x}dy \\
    &= \int_{\bar{\mathcal{X}}\times\mathcal{Y}} \ell(y, f(\bm{x})) p(\bm{x}, y)d\bm{x}dy = \mathbb{E}_{\bar{\mathcal{X}}\times\mathcal{Y}}\Big[\ell(y, f(\bm{x}))\Big].
\end{align}
\end{proof}

From Lemma~\ref{lem:consistency}, we can confirm that the ERM with data augmentation is also minimizing the TTA error.
This means that the data augmentation strategies used in TTA should also be used during training.

\section{Related works}
Although there is no existing research that discusses the theoretical analysis of the TTA, there are some papers that experimentally investigate the behavior of the TTA~\cite{shanmugam2020and}.
In those papers, the following results are reported:
\begin{itemize}
    \item the benefit of TTA depends upon the model’s lack of invariance to the given Test-Time Augmentations;
    \item as the training sample size increases, the benefit of TTA decreases;
    \item when TTA was applied to two datasets, ImageNet~\cite{krizhevsky2012imagenet} and Flowers-102~\cite{Nilsback08}, the performance improvement on the Flowers-102 dataset was small.
\end{itemize}

Because of the simplicity of the concept, several variants of TTA have also been proposed~\cite{kim2020learning,pmlr-v124-lyzhov20a,DBLP:journals/corr/abs-2105-06183}.
It is also a critical research direction to consider whether a theoretical analysis of these variants is possible using the same procedure as discussed in this paper.

\section{Conclusion and Discussion}
In this paper, we theoretically investigate the behavior of TTA.
Our discussion shows that TTA has several theoretically desirable properties.
Furthermore, we showed that the error of TTA depends on the ambiguity of the output.

\begin{figure}[t]
    \centering
    \includegraphics[width=0.95\linewidth]{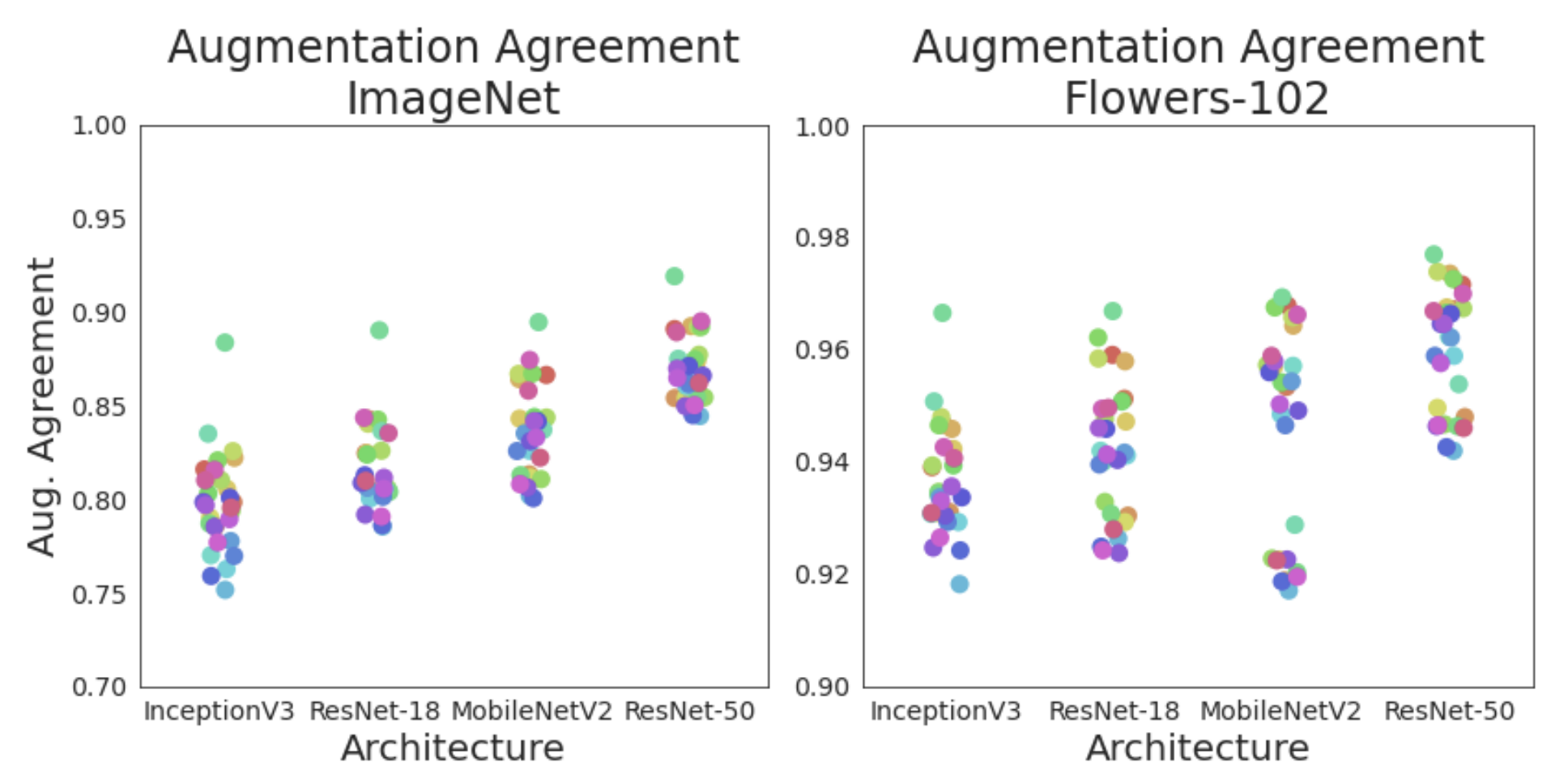}
    \caption{Architectures that benefit least from standard
TTA are also the least sensitive to the augmentations.
Note that this figure is created by \cite{shanmugam2020and}, and see their paper for more details.}
    \label{fig:tta_agreement}
\end{figure}

\subsection{Future works}
In the previous work, some empirical observations are reported~\cite{shanmugam2020and}.
The future of research is to construct a theory consistent with these observations.
\begin{itemize}
    \item When TTA was applied to two datasets, ImageNet~\cite{krizhevsky2012imagenet} and Flowers-102~\cite{Nilsback08}, the performance improvement on the Flowers-102 dataset was small.
    This may be because the instances in Flower-102 are more similar to each other than in the case of ImageNet, and thus are less likely to benefit from TTA.
    Figure~\ref{fig:tta_agreement} shows the relationship between the model architectures and the TTA ambiguity for each dataset~\cite{shanmugam2020and}.
    This can be seen as an analogous consideration to our discussion of ambiguity.
    \item The benefit of TTA varies depending on the model.
    Complex models have a smaller performance improvement from TTA than simple models.
    It is expected that the derivation of the generalization bound considering the complexity of the model such as VC-dimension and Rademacher complexity~\cite{alzubi2018machine,mohri2018foundations} will provide theoretical support for this experiment.
    \item The effect of TTA is larger in the case of the small amount of data.
    It is expected to be theorized by deriving inequalities depending on the sample size.
\end{itemize}

\bibliographystyle{splncs04}
% \bibliography{references}

%
\end{document}